\newcommand{\R}{\mathbb{R}}
\renewcommand{\H}{\mathcal{H}}
\newtheorem{Definition}{Definition}
\newtheorem{Theorem}{Theorem}
\begin{document}

\title{A Convergence indicator for Multi-Objective Optimization Algorithms}

\author{ Thiago Santos%
	     \thanks{Email: santostf@ufop.edu.br} and 
	     Sebastião Xavier%
	     \thanks{Email: sermax@ufop.edu.br}\\ 
	     Federal University of Ouro Preto, Ouro Preto, MG, Brazil 
}

\maketitle

\begin{abstract}
	
	{\bf Abstract}. The algorithms of multi-objective optimization had a relative growth in the last years. 
	Thereby, it's requires some way of comparing the results of these. In this sense, performance 
	measures play a key role. In general, it's considered some properties of these algorithms such as 
	capacity, convergence, diversity or convergence-diversity. There are some known measures such as 
	generational distance (GD), inverted generational distance (IGD), hypervolume (HV), 
	Spread($\Delta$), Averaged Hausdorff distance ($\Delta_p$), R2-indicator, among others. In this 
	paper, we focuses on proposing a new indicator to measure convergence based on the traditional 
	formula for Shannon entropy. The main features about this measure are: 1) It does not require tho know 
	the true Pareto set and 2) Medium computational cost when compared with Hypervolume.
	
	{\bf Keywords}. Shannon Entropy; Performance Measure; Multi-Objective Optimization Algorithms.
	
\end{abstract}

\section{Introduction}
Nowadays, the evolutionary algorithms (EAs) are used to obtain approximate solutions of multi-objective optimization problems (MOP) and these EAs are called multi-objective evolutionary algorithms (MOEAs). Some of these algorithms are very well-known among the community such that NSGA-II (See  \cite{DPAM:02}), 
SPEA-II (See  \cite{Zitzler01spea2:improving}), MO-PSO (See  
\cite{CoelloCoello:2002:MPM:1251972.1252327}) and MO-CMA-ES (See  
\cite{Igel:2007:CMA:1245371.1245373}).
Although the most of MOEAs to use the previous criteria, when $m>3$ the MOP 
is called {\em Many Objectives Optimization Problems} and in this case algorithms Pareto-Based 
is not good enough. Some papers try to explain the reason why such thing happens (See  
\cite{TFSantos:2016,Schutze:2011}). 
To avoid the phenomenon caused by Pareto relation, some researchers indicates others way of 
comparative the elements (See  \cite{Li:2015}) or change into Non-Pareto-based MOEAs, such as 
indicator-based and aggregation-based approaches (See  \cite{Ishibuchi:2010,Wagner:2007}).

With the MOEAs in mind, it's natural to know how theirs outputs are relevant. According the \cite{ZhouQLZSZ:11}, it was listed $23$ indicators which intend to provide us some information about this such as algorithms. Those indicators have basically three goals, that is: 1) Closeness to the theoretical Pareto set, 
2) Diversity of the solutions or 3) number of Pareto-optimal solutions.  
Build an indicator which give us information about all those three goals above it is something tough to do it. 
However, there are great ways to measure the quality in this path. The most used one, it is the Inverted Generational Distance (IGD) and  Generational Distance (GD)(see \cite{IGD:98}) because of simplicity and low cost to calculate. Recently, another one based on this measure was  propose so called Hausdorff Measure (see \cite{Schutze:2012}) which combines the IGD and GD and takes its maximum. These indicator is efficient to obtain informations about closeness of the output of some algorithm with the True Pareto set. The difficult here is because in the order to calculate the IGD/GD it's necessary to know the True Pareto Set of the problem and some ( or the most of time) such as information it's not available.
Another one, very well-know, is the Hypervolume or S-metric (see \cite{HubandetAl2006}). The main problem with this indicator is related with its huge computational cost and to avoid this some authors suggest to use Monte Carlo Simulation to approximate the value and decrease  the cost (see \cite{Bader:2011}).

Here, we will be providing an indicator which allow us talk about nearness of the True Pareto Set. The idea comes from the \cite{SSW:02} what introduce a great function that satisfies the KKT conditions. With this function, we do not need to know anything about the exact solutions of the problem or needs to choose a right reference point.

The paper is organised as follow: on section \ref{sMOP} we establish the general multi-objective problem and on \ref{sSome} we talk about three well-known indicators. Finally, on section \ref{sH} we presents our idea and on \ref{sSimu} we do some numeral simulations.

\section{Multi-objective Problem (MOP)}\label{sMOP}
It's common to define multi-objective optimisation 
problems (MOP) as follows:
\begin{equation}
\begin{array}{l}
\min f(x), ~~ 
f(x)=(f_1(x),f_2(x),\cdots,f_m(x))\\
\mbox{s.t:} ~~ 
x \in \Omega \subset \mathbb{R}^n
\end{array}
\label{MOP}
\end{equation}
in which $x \in \mathbb{R}^n$ is the {\em decision variable vector}, 
$f(x) \in \mathbb{R}^m$ is the {\em objective vector}, and $\Omega \subseteq 
\mathbb{R}^n$ is the {\em feasible set} that we consider as compact and connected region. In this 
work, we assumed here that the functions $f_i(\cdot)$ 
are continuously differentiable (or $\mathcal{C}^2(\Omega)$).

The aim of multi-objective optimisation is to obtain an estimate of the set of points 
belonging to $\Omega$ which minimize, in the certain sense that we will call by 
Pareto-optimality. 
\begin{Definition}
	Let $u, v\in \mathbb{R}^m$. We say that $u$ dominates $v$, denoted by $ u \preceq v$, iff \,  
	$\forall i=1,\ldots,m$
	\begin{equation}
	u_i \leq v_i, \, \textmd{and \,} u \neq v.
	\end{equation}
\end{Definition}

\begin{Definition}
	A feasible solution $x^* \in \Omega$ is a Pareto-optimal solution of 
	problem (\ref{MOP}) if there is no $x \in \Omega$ such that $f(x) \preceq 
	f(x^*)$. The set of all Pareto-optimal solutions of problem \eqref{MOP} is called by
	Pareto Set (PS) and the its image is the Pareto Front (PF). Thus,
	
	\begin{eqnarray*}
		PS &=& \{x \in \Omega|\, \nexists y \in \Omega, f(y) \preceq f(x)\} \\
		PF &=& \{f(x)| x \in PS\}
	\end{eqnarray*}
\end{Definition}

A classical work (See \cite{kuhn1951}) established a relationship between the 
points of PS and gradient informations from the problem \eqref{MOP}. That connection it is
known by Karush-Kuhn-Tucker (KKT) conditions for Pareto optimality that we define as follow:
\begin{Theorem}[KKT Condition \cite{kuhn1951}]
	\label{kkt}
	Let $x^* \in PS$ of the problem\eqref{MOP}. Then, there exists nonnegative scalars $\lambda_i\geq 
	0$, with $\sum_{i=1}^m \lambda_i =  1$, such that
	\begin{equation}
	\sum_{i=1}^m \lambda_i \nabla  f_i(x^*) = 0
	\end{equation}
\end{Theorem}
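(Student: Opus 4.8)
The plan is to prove this necessary condition by a theorem-of-the-alternative argument, reducing Pareto-optimality to the nonexistence of a common descent direction and then invoking Gordan's theorem. I will treat $x^*$ as lying where the feasibility constraints are inactive (an interior point of $\Omega$), which is consistent with the statement containing no multipliers for the constraints defining $\Omega$.

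First I would establish the geometric heart of the matter: if $x^* \in PS$, then there is no direction $d \in \mathbb{R}^n$ satisfying $\nabla f_i(x^*)^{\top} d < 0$ simultaneously for every $i = 1, \ldots, m$. The argument is by contradiction. Suppose such a $d$ exists. Since each $f_i$ is differentiable, the first-order Taylor expansion gives
\begin{equation*}
f_i(x^* + t d) = f_i(x^*) + t\,\nabla f_i(x^*)^{\top} d + o(t),
\end{equation*}
so that for $t>0$ small enough one has $f_i(x^* + t d) < f_i(x^*)$ for each $i$; because $x^*$ is interior, $x^* + t d \in \Omega$ for such small $t$. Then $f(x^* + t d) \preceq f(x^*)$, contradicting the Pareto-optimality of $x^*$.

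Next I would encode this into linear-algebraic form. Let $A \in \mathbb{R}^{m \times n}$ be the matrix whose $i$-th row is $\nabla f_i(x^*)^{\top}$, so that the absence of a common descent direction says precisely that the system $A d < 0$ has no solution $d$. Gordan's theorem of the alternative then guarantees a vector $\lambda = (\lambda_1, \ldots, \lambda_m) \geq 0$ with $\lambda \neq 0$ such that $A^{\top}\lambda = 0$, i.e. $\sum_{i=1}^m \lambda_i \nabla f_i(x^*) = 0$. Finally, since $\lambda \geq 0$ and $\lambda \neq 0$ force $\sum_{i=1}^m \lambda_i > 0$, I would rescale $\lambda_i \mapsto \lambda_i / \sum_j \lambda_j$ to obtain the normalization $\sum_{i=1}^m \lambda_i = 1$ while preserving both nonnegativity and the vanishing combination of gradients.

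The main obstacle I anticipate is not the algebra but the careful justification of the two interfaces of the argument: ensuring the passage from the infinitesimal descent inequalities to an actual dominating feasible point (which is where differentiability and the interior-point hypothesis are genuinely used), and stating Gordan's theorem in exactly the form needed, or else deriving it from a separating-hyperplane argument. If one wishes to cover boundary points of $\Omega$, this clean argument must be upgraded to a Fritz John / KKT formulation carrying additional multipliers for the active constraints; this is why the statement here is implicitly the interior (or unconstrained) case.
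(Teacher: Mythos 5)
Your proposal is correct, but a comparison with ``the paper's proof'' is moot: the paper offers no proof at all. Theorem \ref{kkt} is imported verbatim as a classical result with a citation to \cite{kuhn1951}, so yours is the only actual argument on the table, and it is the standard modern one: Pareto-optimality at an interior point rules out a common descent direction (first-order Taylor expansion, with $t>0$ chosen small enough uniformly over the finitely many objectives), Gordan's theorem converts the infeasibility of $Ad<0$ into a nonzero $\lambda\geq 0$ with $A^{\top}\lambda=0$, and normalization gives $\sum_i\lambda_i=1$. One detail worth noting in your favor: your construction yields strict decrease $f_i(x^*+td)<f_i(x^*)$ in \emph{every} coordinate, which dominates in the sense of the paper's Definition 1 with room to spare, so the contradiction is airtight. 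Your caveat about interior points is not pedantry; it repairs a genuine imprecision in the paper's statement. As written, the theorem quantifies over all $x^*\in PS$ with $\Omega$ an arbitrary compact feasible set, and that is false at boundary points: take $m=1$, $f_1(x)=x$, $\Omega=[0,1]$; then $x^*=0\in PS$, the normalization forces $\lambda_1=1$, yet $\nabla f_1(0)=1\neq 0$. A correct constrained version needs multipliers for the active constraints (Fritz John / full KKT), exactly as you say. Your reading is also consistent with how the paper actually uses the theorem: the function $q$ of \eqref{eq:direcao}, taken from \cite{SSW:02}, is defined there for the unconstrained problem, where $\|q(x^*)\|=0$ characterizes precisely the first-order condition you proved. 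So: correct proof, appropriately scoped, of a statement the paper asserts without proof and states slightly too broadly.
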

This theorem will be fundamental to this paper because we will use this fact to formulate our 
proposal.

\section{Some Convergence Indicators}\label{sSome}
In this section, we will go to relate some metrics or indicators known by scientific 
community and well-done established. The idea here is to compare those indicators further below with our proposal measure.
\subsection{GD/IGD}
The Inverted Generational Distance (IGD) indicator has been using since 1998 when it was created. 
The IGD measure is calculated on objective space, which can be viewed as an approximate distance 
from the 
Pareto front to the solution set in the objective space. In the order to define this metric, we 
assume that the set 
$\Lambda= \{y_1, y_2, \cdots, y_r\}$
is an approximation of the Pareto front for the problem \eqref{MOP}. Let be the $V_{MOEA}$ a 
solutions set obtained by some MOEA in the objective space as 
$V_{MOEA}=\{v_1, v_2, \cdots, v_k\}$ where $v_i$ is a point in the objective space. Then, the 
IGD metric is calculated for the set $V_{MOEA}$ using the reference points $\Lambda$ as follows:
\begin{equation}
IGD(V_{MOEA},\Lambda)=\frac{1}{r}\left (  \sum_{i=1}^r d(y_i,V_{MOEA})^2 \right)^{1/2},
\label{IGDmetric}
\end{equation}
where $d(y,X)$ denotes the minimum Euclidean distance between $y$ and the points in $X$. Besides, we also can to define de $GD$ metric by
\begin{equation}
GD(V_{MOEA},\Lambda)=IGD(\Lambda, V_{MOEA})
\label{GDmetric}
\end{equation}

This indicators are measure representing how "far" the approximation front is from the true Pareto front. Lower values of GD/IGD represents a better performance. The only different between GD and IGD is that in the last one you don't miss any part true Pareto set on comparison.

In \cite{Ishibuchi:14} indicates two main advantages about it: 1) its computational 
efficiency even many-objective problems and 2) its generality which usually shows the overall 
quality of an obtained solution set. The authors in \cite{Ishibuchi:14} studied some difficulties 
in specifying reference points to calculate the IGD metric.

\subsection{Averaged Hausdorff Distance $( \Delta_p )$}
This metric combine generalized versions of $GD$ and $IGD$ that we denote by $GD_p/IGD_p$ 
and defined by, with the same previous notation, 
\begin{eqnarray}
IGD(V_{MOEA},\Lambda)_p & = & \left ( \frac{1}{r} \sum_{i=1}^r 
d(y_i,V_{MOEA})^p \right )^{1/p}\\
GD(V_{MOEA},\Lambda)_p & = & IGD(\Lambda, V_{MOEA})p
\label{igd_p}
\end{eqnarray}

The indicator, so called by $\Delta_p$, was proposed in \cite{Schutze:2012} defined by
\begin{equation}
\Delta_p(X,Y)=max\{IGD(X,Y)_p,GD(X,Y)_p \},
\label{delta_p}
\end{equation}
In \cite{Schutze:2012} the author proved that function is a semi-metric, $\Delta_p$ does not 
fill the triangle inequality, for $1\leq p < \infty$. Many others properties was proved in 
\cite{Schutze:2012}.

\subsection{Hypervolume (HV)}
This indicator has been using by the community since 2003. Basically, the hypervolume of a set of solutions measures the size of the portion of objective space that is dominated by those solutions as a group. In general, hypervolume is favored because it captures in a single scalar both the closeness of the solutions to the optimal set and, to some extent, the spread of the solutions across objective space. There are many works on this indicator such as in \cite{HubandetAl2006} which the author studied how expensive to calculate this indicator was. Few years later, it was proposed a faster alternative by using Monte Carlo simulation ( See  \cite{Bader:2011}) that it was addressed for many objectives problem by Monte Carlo simulation. In the order to get a right definition, you can look at \cite{Bader:2011,HubandetAl2006}.

\section{Proposal Measure $\H$}\label{sH}
In the order to define our proposal measure, consider the quadratic optimization problem 
\eqref{POQ} associated with \eqref{MOP}:
\begin{equation}
\min_{\alpha \in \R^n} \left \{   \left \|\sum_{i=1}^m \alpha_i \nabla  f_i(x) \right \|^2 
; \,\,\, \alpha_i\geq 0, \,\,\,  \sum_{i=1}^m \alpha_i=1 \right \}
\label{POQ}
\end{equation}
The existence and uniqueness of a global solution of the problem \eqref{POQ} is stabilised  
in \cite{SSW:02}. The function $q:\R^n \rightarrow \R^n$ given by
\begin{equation}
q(x)=\sum_{i=1}^m \widehat{\alpha}_i \nabla  f_i(x)
\label{eq:direcao}
\end{equation}
where $\widehat{\alpha}$ is a solution of the (\ref{POQ}), becomes well defined.
There is a interested property  about this function that was proved in \cite{SSW:02}:  
\begin{itemize}
	\item 	each $x^*$ with $\|q(x^*)\|^2=0$, where $|\cdot|$ represents euclidean norm,  
	fulfills the first-order necessary conditions for Pareto optimality given by 
	Theorem \ref{kkt}.
\end{itemize}
Thereby, these points are certainly Pareto candidates what motivates the next definition about 
nearness. 
\begin{Definition}
	A point $x \in \Omega$ is called $\epsilon-$closed to Pareto set if $\|q(x^*)\|^2 <  
	\epsilon$.
\end{Definition}

Let the set $X=\{x_1,x_2,\cdots,x_k\}$ the output from some evolutionary algorithms. With 
the feature about that function, we can to define a new measure by:
\begin{equation}
\mathcal{H}(X):=\frac{1}{2k}\sum_{i=1}^{k}-q_i\log_2(q_i)
\label{Hmeasure}
\end{equation}
where $q_i=min\{1/exp(1), \|q(x_i)\|^2\}$ and put $0\cdot log_2(0)=0$.

The expression \eqref{Hmeasure} is the traditional formula used for Shannon Entropy( See  
\cite{Gray:1990:EIT:90455}). Unlike the original way that is used this as a entropy, in the our metric 
each $q_i$ is not to related with a probability space.

\begin{Theorem}
	About the function in \eqref{Hmeasure}, we have that 
	\begin{equation}
	0\leq \mathcal{H}(X) \leq \frac{1}{2} \frac{\log_2(e^1)}{e^1}
	\end{equation}
\end{Theorem}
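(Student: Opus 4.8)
The plan is to reduce the entire statement to an elementary one-variable analysis of the function $g(t) := -t\log_2(t)$ on the closed interval $[0, 1/e]$, since by construction every $q_i$ lands in this interval. The key structural fact comes straight from the definition $q_i = \min\{1/e, \|q(x_i)\|^2\}$: because $\|q(x_i)\|^2 \geq 0$ and the other candidate in the minimum is $1/e$, every $q_i$ satisfies $0 \leq q_i \leq 1/e$. Thus each summand $-q_i\log_2(q_i)$ is exactly a value $g(q_i)$ with argument confined to $[0, 1/e]$, and it suffices to bound $g$ uniformly on that interval and then sum.

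For the lower bound I would note that for $t \in (0, 1/e] \subset (0,1]$ we have $\log_2(t) \leq 0$, hence $g(t) = -t\log_2(t) \geq 0$; together with the convention $g(0) = 0\cdot\log_2(0) = 0$ this gives $g(q_i) \geq 0$ for every $i$. Summing over $i$ and dividing by the positive factor $2k$ yields $\mathcal{H}(X) \geq 0$ immediately, with no further work.

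For the upper bound the main task is to locate $\max_{t\in[0,1/e]} g(t)$. Writing $g(t) = -t\ln(t)/\ln 2$ and differentiating gives $g'(t) = -(\ln t + 1)/\ln 2$, whose unique zero is $t = 1/e$; moreover $g'(t) > 0$ for $t < 1/e$, so $g$ is increasing on $[0, 1/e]$ and attains its maximum at the right endpoint $t = 1/e$. Evaluating, $g(1/e) = -\tfrac{1}{e}\log_2(1/e) = \tfrac{1}{e}\log_2(e)$. Hence $g(q_i) \leq \log_2(e)/e$ for each $i$, and summing over the $k$ points and dividing by $2k$ produces exactly $\mathcal{H}(X) \leq \tfrac{1}{2}\log_2(e)/e$, as claimed.

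The one point deserving care — and really the only obstacle — is the interplay between the constrained maximization on $[0,1/e]$ and the unconstrained maximum of $-t\log_2 t$: it is a pleasant coincidence that the global maximizer $t = 1/e$ of $g$ on all of $[0,1]$ is precisely the right endpoint of the truncation interval, so the clipping by $1/e$ in the definition of $q_i$ does not cost anything in the bound and in fact makes it tight. I would also make explicit that the convention $0\cdot\log_2 0 = 0$ is consistent with $\lim_{t\to 0^+} g(t) = 0$, which legitimises treating $g$ as continuous on the full closed interval $[0,1/e]$ when invoking the extreme value theorem.
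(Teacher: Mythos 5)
Your proof is correct and follows essentially the same route as the paper's: both establish nonnegativity of each summand from $0 \leq q_i \leq 1/e < 1$, and both obtain the upper bound by bounding each term by the maximum of $-t\log_2(t)$ at $t = 1/e$, then summing and dividing by $2k$. The only difference is that you verify the location of the maximum by an explicit derivative computation and note that the clipping makes the bound tight at the endpoint, whereas the paper simply cites the maximum at $t = 1/e$ as known.
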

\begin{proof}
	First, since $0 \leq q_i \leq 0.5$ then $\mathcal{H}(X)\geq 0$. On the other hand, 
	it is known that the function $f(x)=-x\log_2(x)$ has a maximum at $x=1/exp(1)$, hence
	\begin{eqnarray*}
		\mathcal{H}(X) &=& \frac{1}{2k}\sum_{i=1}^{k}-q_i\log_2(q_i) \\
		&\leq& \frac{1}{2k}\sum_{i=1}^{k}-\exp(-1)\log_2(\exp(-1))\\
		&\leq& \frac{1}{2k}(k)(\exp(-1)\log_2(\exp(1))\\
		&=&\frac{1}{2} \frac{\log_2(\exp(1))}{\exp(1)} \approx0.26537
	\end{eqnarray*}
\end{proof}

A reasonable the interpretation of about $\mathcal{H(\cdot)}$ might be: $(a)$ whether $\mathcal{H}(X)$ closed to or equal $0$ then the set $X$ has a good convergence to the Pareto Set; $(b)$ whether $\mathcal{H}(X)$ closed to or equal $0.26537$ then the set $X$ does 
not convergence to Pareto Set.

The main features associated with this metric are: 1) The true Pareto set doesn't require to be 
predefined and 2) It can be to used with many-objectives problems. The first point is important 
because it is almost impossible to know the true PF in general. On the other hand, the second 
attribute is useful since it doesn't exists many indicator to be used in this problems.
\begin{figure}[!htb]
	\centering
	\includegraphics[scale=0.7]{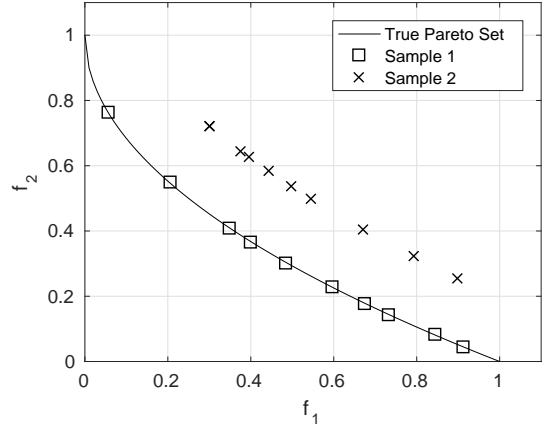}
	\caption{Empirical Idea to illustrate $\H$.}
	\label{example1}
\end{figure}

In the figure \eqref{example1}, we have an empirical with two samples that supposed approximates the 
true Pareto set (PF). Intuitively, we can say that the sample 1 is more closed to PF than sample 2. 
If it were calculated IGD/GD metric, the sample 1 we will get values more near the zero than sample 
2. Also we have the same conclusion with our proposal.

\subsection{Computational Complexity}
To calculate $\mathcal{H}$, the major computational cost is related with calculation of function $q$ because we 
have that previously to find the solution of \eqref{POQ}. This requires $\mathcal{O}(m\cdot n)$ computations to compute it. Consequently, the overall complexity needed is $\mathcal{O}(m\cdot n \cdot k)$.

The aim behind of figures \eqref{example2} and \eqref{example3} it's to show a comparative with others metrics by measuring CPU time. We conducted the experiment on Intel Core I7 with 16gb RAM and we used the traditional benchmark function
DTLZ2 (see \cite{dtlz2002a}). 

With the simulation on figure \eqref{example2}, we fixed three objective functions and set the size of approximate population (popSize) between $100 \leq popSize \leq 2000$ and for the calculation of $\Delta_p$ we need a exact population and for this we set the size of exact population as $4*popSize$. The same we did for the calculation the HV in this case. On the other hand, for the simulation on figure \eqref{example3}, we configured popSize as $50*M$ and size of exact population as $100*M$ where $M$ is the number of objectives function on each run.

\begin{figure}[!htp]
	\centering
	\includegraphics[scale=0.55]{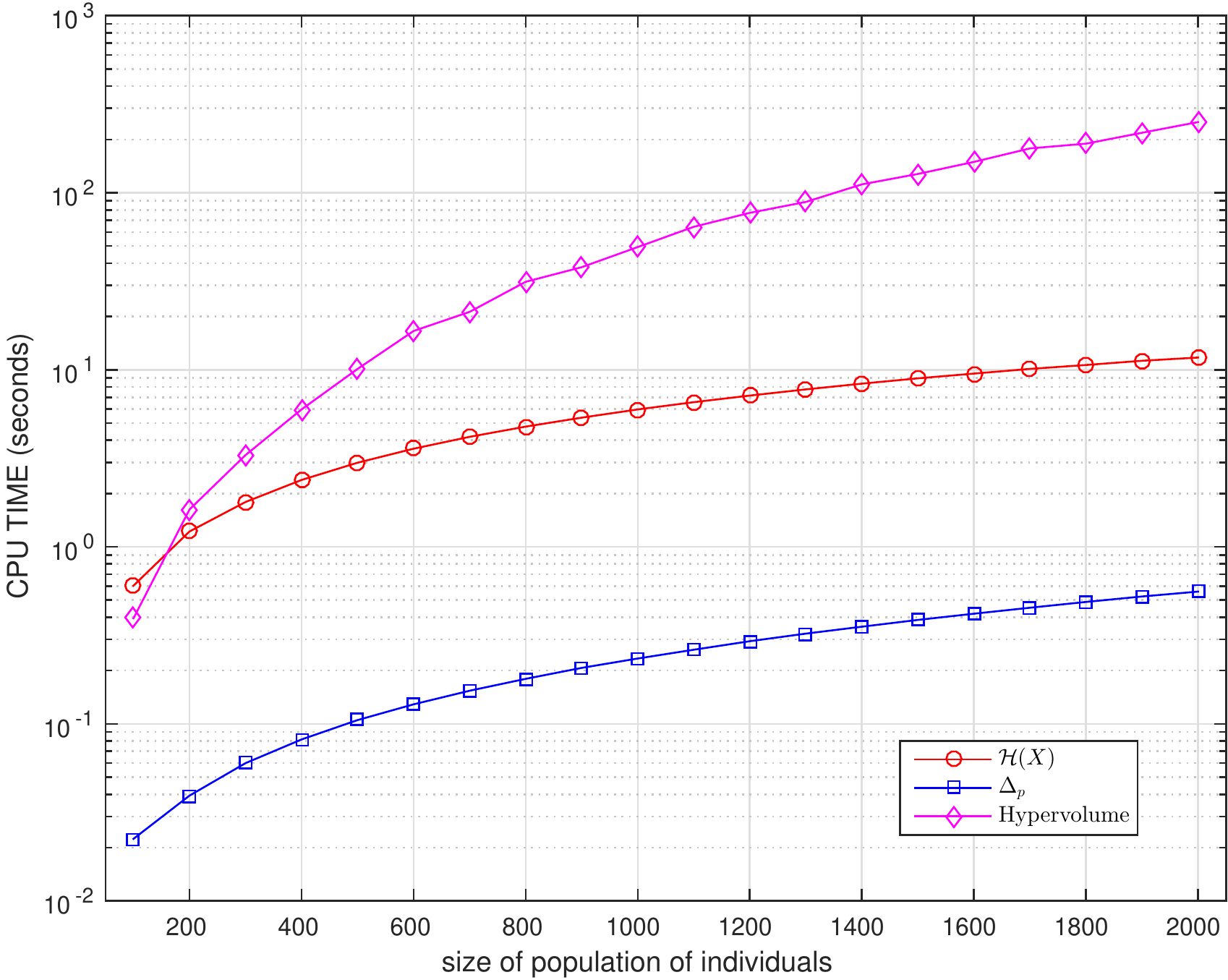}
	\caption{CPU TIME( in logarithmic scale) versus size of population. }
	\label{example2}
\end{figure}

\begin{figure}[!htb]
	\centering
	\includegraphics[scale=0.55]{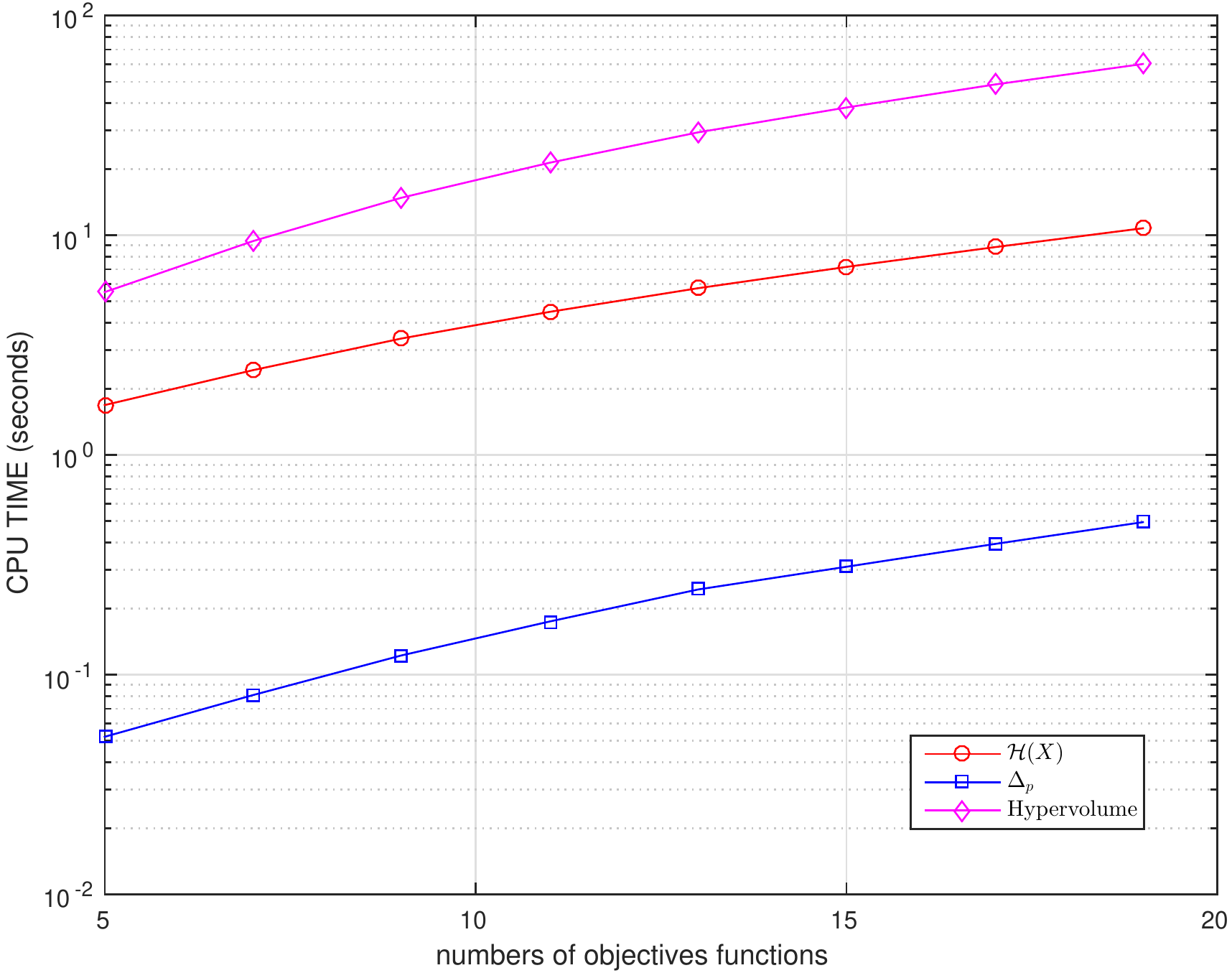}
	\caption{CPU TIME( in logarithmic scale) versus numbers of objective functions.}
	\label{example3}
\end{figure}

\begin{figure}[!htb]
	\centering
	\includegraphics[scale=0.55]{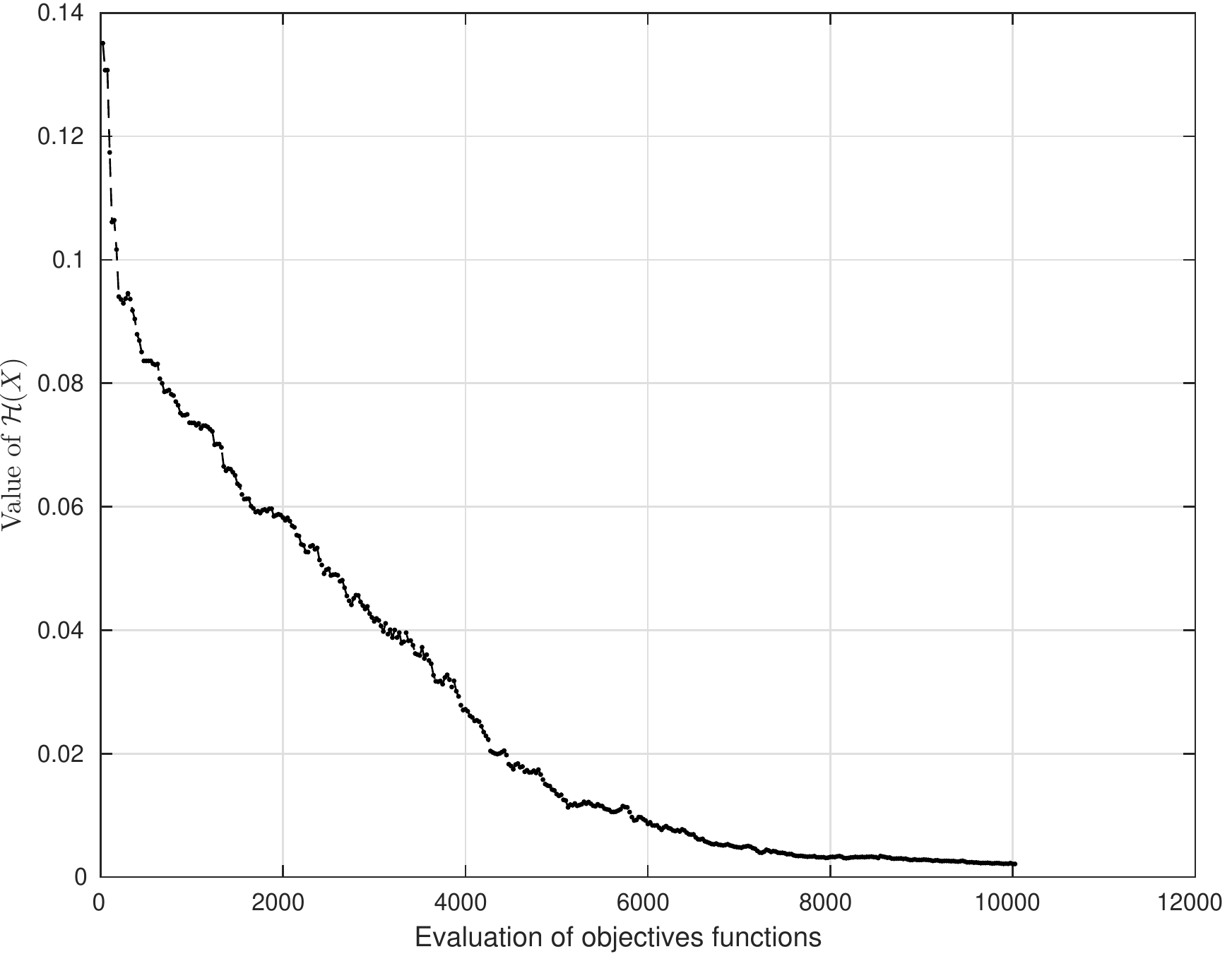}
	\caption{Entropy measure is computed for non-dominated solutions of MOPSO-cd populations 
		for Problem ZDT2.}
	\label{example4}
\end{figure}

Both tests showed us a most cost for calculating of the our indicator in relation the $\Delta_p$ but 
relatively less when we compare with the calculation of HV. A fact that already was expected because of the equation \eqref{POQ} as we pointed out earlier. Nevertheless, from the figure \eqref{example3}, it is possible to conclude by CPU time that for many objectives problems our proposal is acceptable in relation with the HV.

The figure \eqref{example4} shows the behavior of $\mathcal{H}$ with the non-dominated solutions of 
MOPSO-cd algorithm (see \cite{mopsocd}) for the benchmark function ZDT2. Through this experiment, which was fixed $10^5$ evaluation function, it is possible to see that the values of this measure decrease monotonically. Actually, this property comes from the continuity and monotonicity of the function $f(x)=-x\log_2(x)$ into the interval $[0,1/e^1]$.

\section{Simulation Results}\label{sSimu}
The aim here is to compare our proposal with other indicators presented previously, i.e, $\Delta_2$ e HV. Besides, we have been chosen benchmark  DTLZ ( DTLZ1, DTLZ2, DTLZ5 and  DTLZ7) due the facility of its scalarization. We have decided for the three well-known algorithms which the setup follow below:
\begin{enumerate}
	\item MOPSO-CD	(See \cite{mopsocd} ) 
	\item NSGA-II  		(See \cite{DPAM:02}) 
	\item NSGA-III  	    (See		\cite{NSGA3}) 
\end{enumerate}

In both algorithms, we fixed the output population in $100$ individuals with three objectives functions and we ran $30$ times with $25000$ function evaluations running in each one. We have leaded this experiment on PlatEMO framework (See \cite{platEmo}).

\begin{table}[!htp]
	\centering
	\small
	\caption{DTLZ1 function results}
	\begin{tabular}{|c|c|c|c|}
		\hline 
		\rule[-1ex]{0pt}{2.5ex}  			& $\Delta_2$  & HV 			& $\H$ \\ 
		\hline 
		\rule[-1ex]{0pt}{2.5ex} NSGA-II 	&  		$ 0.030712 $ ( $ 0.024266 $ ) &  $ 1.296030 $ ( $ 0.027097 $ ) & $ 0.246989 $ ( $ 0.006442 $ )   \\ 
		\hline 
		\rule[-1ex]{0pt}{2.5ex} NSGA-III 	&  	{\boldmath $ 0.021337 $ ( $ 0.001355 $ )} &  {\boldmath $ 1.303803 $ ( $ 0.000829 $ )} &  {\boldmath $0.226985 $ ( $ 0.002771 $ )}  \\ 
		\hline 
		\rule[-1ex]{0pt}{2.5ex} MOPSO-CD    &  		$ 8.760441 $ ( $ 2.893284 $ ) &  $ 0.007216 $ ( $ 0.039522 $ ) & $ 0.262165 $ ( $ 0.005288 $ )  )  \\ 
		\hline 
	\end{tabular} 
	\label{rDTLZ1}
\end{table}

\begin{table}[!htp]
	\centering
	\small
	\caption{DTLZ2 function results}
	\begin{tabular}{|c|c|c|c|}
		\hline 
		\rule[-1ex]{0pt}{2.5ex}  					& $\Delta_2$  & HV 			& $\H$ \\ 
		\hline 
		\rule[-1ex]{0pt}{2.5ex} NSGA-II 	   &  	$ 0.066306 $ ( $ 0.003177 $ ) &  $ 0.707364 $ ( $ 0.005279 $ ) & $ 0.038143 $ ( $ 0.004933 $ )   \\ 
		\hline 
		\rule[-1ex]{0pt}{2.5ex} NSGA-III 	  &  {\boldmath	$ 0.054834 $ ( $ 0.000627 $ )} & {\boldmath  $ 0.744441 $ ( $ 0.000132 $ )} & {\boldmath $ 0.002395 $ ( $ 0.000631 $ )}   \\ 
		\hline 
		\rule[-1ex]{0pt}{2.5ex} MOPSO-CD &  	$ 0.077568 $ ( $ 0.003564 $ ) &  $ 0.661220 $ ( $ 0.012205 $ ) & $ 0.129108 $ ( $ 0.011345 $ )  \\ 
		\hline 
	\end{tabular} 
	
	\label{rDTLZ2}
\end{table}

\begin{table}[!htp]
	\centering
	\small
	\caption{DTLZ5 function results}
	\begin{tabular}{|c|c|c|c|}
		\hline 
		\rule[-1ex]{0pt}{2.5ex}  					& $\Delta_2$  & HV 			& $\H$ \\ 
		\hline 
		\rule[-1ex]{0pt}{2.5ex} NSGA-II 	   &  	{\boldmath $ 0.005768 $ ( $ 0.000314 $ )} &   {\boldmath $0.437257 $ ( $ 0.000299 $ )} & {\boldmath $ 0.002650 $ ( $ 0.000656 $ )}   \\ 
		\hline 
		\rule[-1ex]{0pt}{2.5ex} NSGA-III 	  &  	$ 0.013543 $ ( $ 0.001737 $ ) &  $ 0.429002 $ ( $ 0.002105 $ ) & $ 0.002882 $ ( $ 0.000711 $ )   \\ 
		\hline 
		\rule[-1ex]{0pt}{2.5ex} MOPSO-CD &  	$ 0.006982 $ ( $ 0.000640 $ ) &  $ 0.434430 $ ( $ 0.000654 $ ) & $ 0.015889 $ ( $ 0.002332 $ )   \\ 
		\hline 
	\end{tabular} 
	
	\label{rDTLZ5}
\end{table}

Throughout the tables \eqref{rDTLZ1}, \eqref{rDTLZ2} and \eqref{rDTLZ5} we can conclude the same thing with all indicator, included our proposal as it was expected by the previous section.

\section{Conclusions}\label{sCon}
In this paper, we have introduced a new indicator $\mathcal{H}$	which goal is evaluate the outcome from an evolutionary algorithm on the Multi-objective optimisation problems. We have tested the new approach with some classic benchmark and make a comparative with some others indicators that is already known. By experimental results, we can to conclude the same what is indicated by the other indicators of performance. Unlike such these indicators, our proposal needs to known nothing about the true Pareto of set of the MOP. This features, we consider a great to thing because most MOPs is doesn't have this information. 

For the future, we will try to decrease the condition about the function of the problems, until now we required being $\mathcal{C}^2(\Omega)$.

\section*{Acknowledgment}
The authors would like to be thankful for the FAPEMIG by financial support to this project.

\bibliographystyle{plain}
\bibliography{mybib}

\end{document}